\documentclass{article}

\usepackage{microtype}
\usepackage{graphicx}
\usepackage{subcaption}
\usepackage{booktabs} 

\usepackage{hyperref}

\usepackage[preprint]{icml2026}

\usepackage{amsmath}
\usepackage{amssymb}
\usepackage{mathtools}
\usepackage{amsthm}

\usepackage[capitalize,noabbrev]{cleveref}
\crefname{lemma}{Lemma}{Lemmas}
\crefname{figure}{Figure}{Figures}
\crefname{section}{Section}{Sections}
\crefname{appendix}{Appendix}{Appendices}
\crefname{table}{Table}{Tables}
\crefname{equation}{Eq.}{Equations}

\usepackage{aliascnt}
\newcommand{\added}[1]{#1}
\newcommand{\deleted}[1]{}

\theoremstyle{plain}
\newtheorem{theorem}{Theorem}[section]
\newtheorem{proposition}[theorem]{Proposition}
\newaliascnt{lemma}{theorem}
\newtheorem{lemma}[lemma]{Lemma}
\aliascntresetthe{lemma}

\theoremstyle{definition}

\theoremstyle{remark}

\icmltitlerunning{Decision-Aware Training for Sample-Based Generative Models}

\begin{document}

\twocolumn[
  \icmltitle{Decision-Aware Training for Sample-Based\\Generative Models}



  \icmlsetsymbol{equal}{*}

  \begin{icmlauthorlist}
    \icmlauthor{Kornelius Raeth}{tue}
    \icmlauthor{Nicole Ludwig}{tue,aug}
  \end{icmlauthorlist}

  \icmlaffiliation{tue}{Tübingen AI Center, University of Tübingen, Germany}
  \icmlaffiliation{aug}{University of Augsburg, Germany}

  \icmlcorrespondingauthor{Kornelius Raeth}{kornelius.raeth@uni-tuebingen.de}


  \vskip 0.3in
]



\printAffiliationsAndNotice{}  

\begin{abstract}
Sample-based generative models are increasingly used for probabilistic forecasting in high-stakes decision settings, yet their training objectives are blind to the decision maker’s cost structure.
These models are commonly trained with strictly proper scoring rules, such as the energy score, which allocate their training signal in proportion to data density, with no awareness of where forecast errors are most costly for downstream decisions.
We therefore propose \emph{decision-aware training for sample-based generative models}, augmenting the energy score objective with a differentiable decision loss that directly penalises the cost incurred by acting on the model's forecast.
This combined loss is theoretically grounded, as the decision loss is itself a proper scoring rule.
We validate our method on one synthetic and two real-world tasks, showing targeted improvements in cost-sensitive regions while retaining full probabilistic forecasts.
\end{abstract}

\section{Introduction}
\label{sec:intro}

Probabilistic forecasting models are increasingly deployed in high-stakes decision-making, for example, a wind farm operator committing to a power contract based on a wind forecast, risking costly shortfall penalties if they over-commit, or a farmer deciding whether to apply frost protection based on a temperature forecast, balancing asset loss against the cost of intervention.
In the weather domain, state-of-the-art forecasting systems~\citep{lang2024b, price2023} are trained with strictly proper scoring rules such as the energy score~\citep{gneiting2007strictly}, 
which are uniquely minimised by the true conditional distribution.
In practice, however, limited model capacity and empirical risk minimisation yield only local optimality~\citep{blasiok2023}, 
inducing tradeoffs between model error in different regions of the input/output space~\citep{donti2017task}: proper scoring rules distribute the training gradient in proportion to data density, with no awareness of the decision maker's cost structure.
The model's limited capacity is allocated globally, leaving decision-critical regions of the output space potentially underserved.

Given a forecast, a decision maker with cost function $c(a, y)$, of action $a$ and outcome $y$, selects the action that minimises expected cost under the forecast distribution;
a point forecast is insufficient to evaluate this expectation.
A good forecast should yield low expected costs when acted upon and correctly anticipate those costs, the latter property being known as \emph{decision calibration}~\citep{zhao2021}.
Crucially, the observed cost of the optimal action is itself a proper scoring rule~\citep{hartline2025smooth, kleinberg2023u}, placing it in the same family as the energy score which licenses their combination as a theoretically well-founded training objective for decision-aware training.

We augment the energy score with the optimal action cost as a differentiable decision loss.
The decision loss alone is insufficient as a training objective: it only constrains the model in cost-sensitive regions, leaving the rest of the distribution unanchored and prone to degeneration. The energy score acts as that anchor, preventing the model from collapsing outside cost-sensitive regions.
During training, samples drawn from the model are used to compute the optimal action $a^*$ via a differentiable optimization layer; the decision loss then evaluates the cost incurred by $a^*$ against the true outcome $y$, and gradients flow back through $a^*$ to the model parameters via implicit differentiation.
Our method is theoretically grounded and leads to better downstream decisions while retaining full probabilistic forecasts, as validated on synthetic and real-world forecasting tasks.

\textbf{Contributions.}
\begin{itemize}
    \item An end-to-end \emph{decision-aware} training framework for implicit generative models and distributional diffusion models,
    bridging decision-focused learning and probabilistic forecasting by retaining distributional accuracy via an energy score anchor.
    \item A theoretically principled combination of decision loss and energy score, exploiting the equivalence between optimal action costs and proper scoring rules.
    \item A gradient analysis showing which regions benefit from the decision loss and why, based on the cost function structure.
    \item Empirical validation on synthetic and real-world forecasting tasks, evaluating both decision loss and decision calibration.
\end{itemize}
In the following, we start with summarising the necessary background in \cref{sec:forecasting,sec:decision}  before introducing our method in \cref{sec:method} and validating it in \cref{sec:experiments}.

\section{Probabilistic Regression and Scoring Rules}
\label{sec:forecasting}

We consider a probabilistic regression setting where input features $X \in \mathcal{X}$ and target $Y \in \mathcal{Y}$ are random variables distributed according to $P_{X,Y}$. 
Given a realisation $x$ of $X$, the goal is to learn a model $h_\theta$ that produces a predictive distribution $\hat{F}_x = h_\theta(x)$ over $Y$ which ideally recovers the true conditional $Y \mid X = x$.
We drop the subscript $x$ throughout for brevity.

For this work, we consider two classes of sample-based generative models: implicit generative models, which map noise directly to samples, and distributional diffusion models, which generate samples via an iterative denoising process. Both represent $\hat{F}_x$ implicitly via samples $\{\hat{y}_m\}_{m=1}^M$.
\emph{Implicit generative models}~\citep{mohamed2016learning} map a conditioning input $x$ and a noise vector $\varepsilon_m \sim \mathcal{N}(0, I)$ directly to a sample $\hat{y}_m \sim \hat{F}_x$ via $\hat{y}_m = f_\theta(x, \varepsilon_m)$; drawing $M$ independent noise vectors in parallel yields $M$ samples in a single forward pass.
\emph{Diffusion models}~\citep{ho2020denoising, song2020score} gradually add noise to the clean target $y_0$ and learn a reverse denoising process conditioned on $x$.\footnote{We use $y_0, y_t$ for the clean and noisy target following our regression notation; the standard diffusion literature uses $x_0, x_t$.}
At each reverse step, a denoiser estimates the clean target $y_0$ from a noisy observation $y_t$ and the conditioning input $x$; in the standard formulation this prediction is a point estimate.
Since $y_0$ is genuinely uncertain given $(y_t, x)$, we consider the special class of \emph{distributional diffusion models}~\citep{debortoli2025distributional, kneissl2025improved} whose denoiser models a full distribution over $y_0$, from which samples can be drawn at each step. 
At inference, samples from $\hat{F}$ are obtained by iterating the full reverse chain.

A scoring rule $S(\hat{F}, y)$ assigns a scalar score to a predictive distribution $\hat{F}$ given the realised outcome $y$.
It is \emph{proper} if the true conditional $F$ minimises the expected score, $\mathbb{E}_{Y \sim F}[S(\hat{F}, Y)] \geq \mathbb{E}_{Y \sim F}[S(F, Y)]$ for all $\hat{F}$, and \emph{strictly proper} if $F$ is the unique minimiser.
The \emph{energy score} (ES)~\citep{gneiting2007strictly} is a popular example of a strictly proper scoring rule that admits a sample-based estimator (see \cref{app:energy_score}). For univariate targets the energy score reduces to the \emph{continuous ranked probability score} (CRPS), widely used in meteorological forecast verification~\citep{gneiting2007} and as a training objective for state-of-the-art ML weather prediction models~\citep{lang2024b, price2023}.
Both model classes introduced above are commonly trained by minimising strictly proper scoring rules, and in particular the energy score.

\section{Decision-Making Under Uncertainty}
\label{sec:decision}

Given a probabilistic forecast $\hat{F}$ over $\mathcal{Y}$, a decision maker selects an action $a \in \mathcal{A}$ to minimise expected cost under the forecast.
This is modelled via a cost function $c : \mathcal{A} \times \mathcal{Y} \to \mathbb{R}$ that encodes the preferences of a particular decision maker; the Bayes-optimal action is
\begin{align}
    a^*(\hat{F}) &\coloneqq \operatorname*{argmin}_{a \in \mathcal{A}} \, \mathbb{E}_{Y \sim \hat{F}}[c(a, Y)] \label{eq:optimal_action}\\
    &\approx \operatorname*{argmin}_{a \in \mathcal{A}} \frac{1}{M}\sum_{m=1}^M c(a, \hat{y}_m)\added{\eqqcolon a^*(\hat{F}_M)} \label{eq:optimal_action_mc}.
\end{align}
The expectation ensures that rare but sufficiently costly events still influence the optimal action.
For sample-based models, the expectation is replaced by its Monte Carlo estimate over samples $\{\hat{y}_m\}$\added{, where $\hat{F}_M$ denotes the empirical forecast distribution over $\{\hat{y}_m\}$}.

The realised cost when acting on forecast $\hat{F}$ and observing the true outcome $y$ is $c(a^*(\hat{F}), y)$.
A forecast is \emph{decision-calibrated}~\citep{zhao2021} if its predicted expected cost matches the realised cost in expectation over inputs:
\begin{equation}
    \mathbb{E}_{X}\,\mathbb{E}_{Y \sim \hat{F}_X}\bigl[c(a^*(\hat{F}_X),\, Y)\bigr] = \mathbb{E}_{(X,Y)}\bigl[c(a^*(\hat{F}_X),\, Y)\bigr].
    \label{eq:dec_calibration}
\end{equation}
Intuitively, that means a decision-calibrated model correctly anticipates the cost of its own optimal action. We measure decision miscalibration as the mean absolute difference between predicted cost and realised cost across test points $\frac{1}{N} \sum_{i=1}^N \left| \mathbb{E}_{Y \sim \hat{F}_{x_i}}\bigl[c(a^*(\hat{F}_{x_i}), Y)\bigr] - c(a^*(\hat{F}_{x_i}), y_i) \right|$.

In principle, strictly proper scoring rules provide a path to optimal decisions: they guarantee convergence to $F$ at the global optimum~\citep{gneiting2007strictly}, and the true $F$ yields optimal decisions by construction.\footnote{In fact, distribution calibration~\citep{song2019} would suffice for optimal decisions, but even this weaker condition is not attainable nor verifiable in practice, motivating a focus on decision calibration directly~\citep{derr2025, sahoo2021}}.
In practice, however, limited data, model capacity, and stochastic optimisation yield only approximate local minima~\citep{blasiok2023}, with no awareness of where $c$ is sensitive~\citep{donti2017task}, so a model minimising the energy score may be decision-miscalibrated and incur high decision cost.

\begin{figure*}[t]
    \centering
    \includegraphics[width=0.75\linewidth]{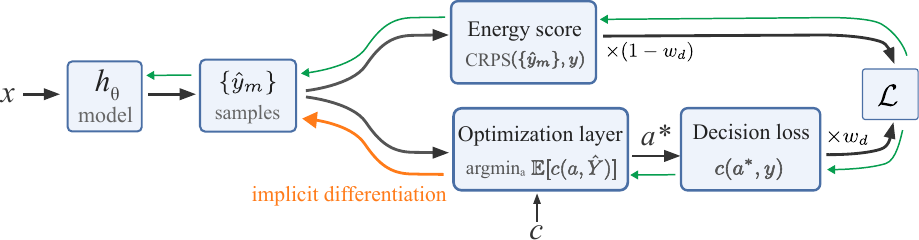}
    \caption{
        \textbf{Method overview.}
        In the forward pass the model $h_\theta$ produces samples $\{\hat{y}_m\}$ given input $x$.
        These feed two branches: the \emph{Energy Score} (CRPS) computes the standard scoring rule loss given the observed target $y$;
        the \emph{Optimization layer} solves \cref{eq:optimal_action} to obtain $a^*$, and the \emph{decision loss} evaluates $c(a^*, y)$.
        Gradients flow back from CRPS via standard autodiff (green arrows) and from decision loss through the optimization layer via implicit differentiation (orange arrow).
    }
    \label{fig:method_overview}
\end{figure*}
\section{Decision-Aware Training}
\label{sec:method}

We address the gap between scoring-rule training and downstream decision quality by augmenting the scoring rule objective with a differentiable decision loss that directly penalises suboptimal decisions (\cref{fig:method_overview}).
We first establish the theoretical grounding and introduce the combined objective (\cref{sec:dec_loss}), 
then analyse the gradient chain through the optimal action (\cref{sec:gradient}), 
and describe the model architectures and training procedure (\cref{sec:architectures_training}).

\subsection{Loss formulation and theoretical grounding}
\label{sec:dec_loss}

The decision problem of \cref{sec:decision} suggests a natural training signal: directly penalise the cost incurred by the model's optimal action at the realised outcome.
We define the \emph{decision loss}
\begin{equation}
    S_c(\hat{F}, y) = c\!\left(a^*(\hat{F}),\, y\right),
    \label{eq:dec_loss}
\end{equation}
where $a^*(\hat{F})$ is the Bayes-optimal action in~\cref{eq:optimal_action}.
The key observation is that $S_c$ is itself a proper scoring rule.
\begin{proposition}[Hartline et al., 2025; Kleinberg et al., 2023]
\label{prop:psr}
The decision loss $S_c$ is a proper scoring rule: $\mathbb{E}_{Y \sim F}[S_c(F, Y)] \leq \mathbb{E}_{Y \sim F}[S_c(\hat{F}, Y)]$ for all forecasts $\hat{F}$.
However, it is not strictly proper: there can be $\hat{F} \neq F$ with $a^*(\hat{F}) = a^*(F)$ that achieve the same minimum.
\end{proposition}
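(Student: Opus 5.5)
The plan is to prove propriety directly from the definition of the Bayes-optimal action in \cref{eq:optimal_action}, and then show non-strictness with an explicit counterexample. Fix a true distribution $F$ and an arbitrary forecast $\hat{F}$, and write $a_F = a^*(F)$ and $a_{\hat F} = a^*(\hat{F})$. I will assume the argmin in \cref{eq:optimal_action} is attained; if it is not unique, a fixed tie-breaking rule selects one element. By \cref{eq:dec_loss},
\begin{equation*}
\mathbb{E}_{Y\sim F}[S_c(F,Y)] = \mathbb{E}_{Y\sim F}[c(a_F,Y)] = \min_{a\in\mathcal{A}} \mathbb{E}_{Y\sim F}[c(a,Y)],
\end{equation*}
where the second equality holds because $a_F$ minimises the expected cost under $F$. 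Since $a_{\hat F}\in\mathcal{A}$ is just some admissible action, this minimum is at most $\mathbb{E}_{Y\sim F}[c(a_{\hat F},Y)] = \mathbb{E}_{Y\sim F}[S_c(\hat F,Y)]$. That inequality is exactly propriety. The key point is that $\hat F$ affects the score only through the action it induces, and no action can beat the one that is optimal under $F$ itself.

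For the failure of strict propriety, it suffices to give one cost function with two distinct distributions that induce the same optimal action. I will use a pinball-type cost on $\mathcal{Y}=\mathbb{R}$, namely $c(a,y)=\tau(y-a)_+ + (1-\tau)(a-y)_+$. Its Bayes action is the $\tau$-quantile of the distribution. Take any $\hat F\neq F$ with the same $\tau$-quantile, for example $F=\mathcal{N}(0,1)$ and $\hat F=\mathcal{N}(0,4)$ with $\tau=1/2$. Then $a^*(\hat F)=a^*(F)$, so $S_c(\hat F,\cdot)=S_c(F,\cdot)$ pointwise, and the two expected scores coincide. This shows $F$ is not the unique minimiser. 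A second, even simpler example is any cost with a finite action set, since every distribution in an open region of distribution space maps to the same action.

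The only delicate step is ensuring the quantities are well defined. The argmin must exist, expectations of $c(a,Y)$ under $F$ must be finite, and ties must be broken consistently. I will state these as standing assumptions, namely existence of a minimiser and integrability of $c(a,\cdot)$ for each $a$. With a fixed tie-breaking rule the argument goes through unchanged, because any selected minimiser attains the minimum. With these assumptions in place, no genuine obstacle remains; the argument is essentially one line plus a counterexample.
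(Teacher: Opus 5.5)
Your proof is correct and is essentially the argument the paper relies on. The paper gives no separate proof; it cites Hartline et al.\ and Kleinberg et al., where properness is exactly your observation that $a^*(F)$ minimises the $F$-expected cost, so the action $a^*(\hat F)$ induced by any other forecast can only do weakly worse, and the non-strictness clause is precisely your mechanism of distinct forecasts sharing a Bayes action. Your median example ($\mathcal{N}(0,1)$ versus $\mathcal{N}(0,4)$ under pinball loss with $\tau=1/2$) is a valid concrete witness, and you are right to read ``not strictly proper'' existentially, since for some costs (e.g.\ log loss over distribution-valued actions) $S_c$ is in fact strictly proper.
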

This means $S_c$ is part of the same loss family as the energy score.\footnote{The result is even stronger: the space of decision losses induced by arbitrary bounded cost functions is equivalent to the space of proper scoring rules~\citep{hartline2025smooth, kleinberg2023u} --- every proper scoring rule arises as the optimal action cost of some cost function.}
However, since the minimum is not unique, optimising $S_c$ alone admits degenerate solutions.
The energy score provides the missing uniqueness: as a strictly proper scoring rule, it admits $F$ as its unique minimiser, and a positive linear combination of a strictly proper and a proper scoring rule remains strictly proper (\Cref{lemma:strict_properness}, \cref{app:proof_strict_properness}).
This result directly applies to the training of sample-based generative models, offering a natural extension to the existing scoring rule training objective.

\textbf{Combined loss.}
Hence, we define our loss as convex combination of energy score and decision loss
\begin{equation}
    \mathcal{L} = (1 - w_d)\,\mathrm{ES} + w_d\,S_c, \qquad w_d \in [0, 1),
    \label{eq:combined}
\end{equation}
with a unique minimum at $\hat{F} = F$. The decision weight $w_d$ controls the tradeoff between distributional accuracy and decision awareness.
The energy score serves as a theoretical anchor, eliminating the degenerate solutions admitted by $S_c$ alone, while $S_c$ concentrates the gradient on the cost-relevant region. This combination yields a theoretically grounded objective that is both decision-aware and distribution preserving.


\subsection{Optimizing the decision loss}
\label{sec:gradient}

Training with the decision loss $S_c$ requires computing $a^*$ from samples via \added{\cref{eq:optimal_action_mc}}; the backward pass must then differentiate $a^*$ with respect to those samples.
For general cost functions, $a^*$ has no closed-form expression in the samples $\{\hat{y}_m\}$, so we cannot differentiate it with respect to $\{\hat{y}_m\}$ directly; instead we differentiate the optimality conditions that define $a^*$ treating $a^*$ as an implicit function of $\{\hat{y}_m\}$ (implicit differentiation).
We compute $a^*$ within a \emph{differentiable optimization layer}~\citep{amos2017optnet, donti2017task, blondel2022efficient} over the feasible set $\mathcal{A} = [a_\mathrm{min}, a_\mathrm{max}]$, treating model samples as fixed.
This requires $c$ to be differentiable in $a$ and the expected cost in~\cref{eq:optimal_action} to be strictly convex at $a^*$, allowing implicit differentiation through $a^*$.

The decision loss $S_c = c(a^*, y_\mathrm{obs})$ depends on model parameters only through the samples $\{\hat{y}_m\}$ and hence through $a^*$.
The gradient with respect to each sample is
\begin{equation}
    \frac{\partial S_c}{\partial \hat{y}_m} = \left.\frac{\partial c}{\partial a}\right|_{a^*} \cdot \frac{\partial a^*}{\partial \hat{y}_m},
    \label{eq:grad_chain}
\end{equation}
where $\partial a^* / \partial \hat{y}_m$ is obtained via implicit differentiation, whereas the remaining upstream gradient with respect to model parameters is computed via standard automatic differentiation.

\textbf{Per-sample gradient.}
When $a^* \in (a_\mathrm{min}, a_\mathrm{max})$, implicit differentiation gives
\begin{equation}
    \frac{\partial a^*}{\partial \hat{y}_m} = -\left[\frac{1}{M}\sum_{j=1}^M \frac{\partial^2 c}{\partial a^2}(a^*, \hat{y}_j)\right]^{-1} \frac{1}{M}\, \frac{\partial^2 c}{\partial a\, \partial \hat{y}_m}(a^*, \hat{y}_m).
    \label{eq:ift_member}
\end{equation}
The gradient signal is concentrated on samples $\hat{y}_m$ where the cost has non-zero cross-curvature, $\partial^2 c / \partial a \partial \hat{y}_m \neq 0$, i.e., where small perturbations of $\hat{y}_m$ change how sensitive the cost is to $a$; samples in flat or saturated cost regions contribute almost nothing (full derivation in \cref{app:ift_derivation}).
In parametric approaches (e.g.\ a Gaussian MLP), gradients update distribution parameters such as mean and variance, shifting the entire predictive distribution simultaneously. 
In our sample-based models, the noise input $\varepsilon$ enables flexible distributional shapes and targeted corrections to specific parts of the predictive distribution even with shared weights. However, this flexibility can also lead to distributional degeneration at high $w_d$ in practice.

\textbf{Constraint boundary blocking.}
~\Cref{eq:ift_member} applies when $a^*$ is interior, where the first-order condition $\nabla_a \mathbb{E}[c(a^*, \{\hat{y}_m\})] = 0$ holds.
When $a^*$ reaches a constraint boundary ($a^* \in \{a_\mathrm{min}, a_\mathrm{max}\}$), \cref{eq:ift_member} does not apply. In this case the gradient is simply zero: 
geometrically, small perturbations of the samples do not move $a^*$ off the boundary, so $\partial a^*/\partial \hat{y}_m = 0$ (formal derivation via KKT conditions in \cref{app:ift_derivation}), 
blocking the decision loss gradient chain in~\cref{eq:grad_chain} entirely.\footnote{The measure-zero case where the unconstrained minimiser coincides exactly with the boundary is excluded.}
The energy score is then the only active training signal.

\subsection{Model architectures and training procedure}
\label{sec:architectures_training}

For this work, we consider two sample-based generative architectures representative of those used in operational forecasting systems.
We extend both with a differentiable optimization layer that computes $a^*$ from the model samples during training.

\textbf{Implicit generative model.}
An MLP maps $M$ independent noise draws to $M$ samples in a single forward pass, from which $a^*$ is computed.
Architecture details are given in \cref{app:implementation}.

\textbf{Distributional diffusion model.}
An MLP denoiser produces $M$ samples of $y_0$ at each denoising step, from which $a*$ is computed.
The decision loss acts at the denoiser level, while the predictive distribution $\hat{F}_x$ at inference is shaped by iterating the full reverse chain, a second layer of sampling not present in the implicit generative model.

\textbf{Training procedure.}
At each training step, the model is trained by optimising the combined loss (\Cref{eq:combined}).
The model produces a set of samples $\{\hat{y}_m\}_{m=1}^M$ for each input $x$ in the batch.
These samples enter both terms of the combined loss (\cref{fig:method_overview}). First, the energy score (CRPS for univariate targets) is evaluated as usual \added{(see Appendix \cref{app:energy_score})}, with gradients computed via standard autodiff. Second, $a^*$ is computed by minimising the expected cost over the same samples (\Cref{eq:optimal_action_mc}) via the optimisation layer forward pass, treating the model samples as fixed.
The decision loss $S_c$ is then evaluated and its gradient flows back through $a^*$ via implicit differentiation (\cref{sec:gradient}).
\added{In practice, both loss terms are estimated from finite $M$ samples.}
Both losses are normalised to unit scale before training, so that $w_d$ directly controls the relative weighting between the two terms independently of the cost function's magnitude; further implementation details are given in \cref{app:implementation}.

\section{Experiments}
\label{sec:experiments}
In the following we evaluate our method on one synthetic and two real-world decision tasks.
\subsection{Synthetic Decision Task}
\label{sec:synthetic}
\begin{figure*}[t]
    \centering
    \includegraphics[width=\linewidth]{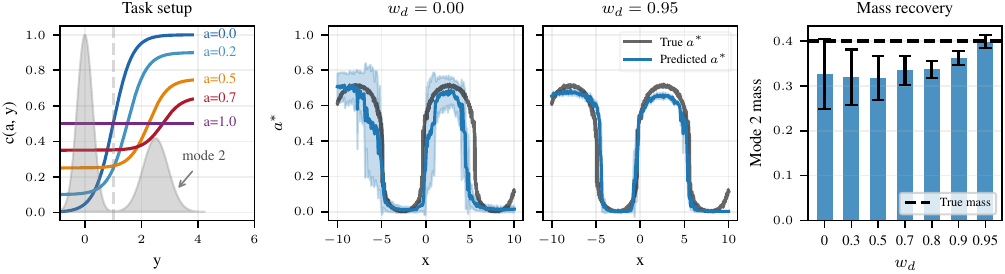}
    \caption{
        \textbf{Synthetic decision task.}
        \emph{Left:} Cost function $c(a, y)$ for five protection levels $a$, overlaid with the marginal $p(y)$ (grey). The threshold at $y=1.0$ separates the two modes; mode positions are fixed across $x$, only the mixture weight varies.
        \emph{Center (two panels):} Predicted $a^*(x)$ (mean $\pm$ 1 std over training seeds, representative data seed) for $w_d=0$ and \added{$w_d=0.95$}. Pure CRPS ($w_d=0$, left) fails to track the ground truth at the transitions; decision-aware training  (ours) recovers the correct shape (\added{$w_d=0.95$}, right).
        \emph{Right:} Marginal mode 2 mass vs.\ $w_d$ (mean $\pm$ std over seeds); dashed line marks the true mass ($0.4$). Mass \added{approaches the true value for larger decision weights} $w_d$.
    }
    \label{fig:synthetic}
\end{figure*}

\textbf{Setup.}
To study the mechanism underlying decision-aware training in a controlled setting where the ground-truth conditional distribution is known, we construct a synthetic dataset with the following structure. 
The target $Y \mid X$ follows a bimodal Gaussian mixture distribution with fixed mode positions ($\mu_1 = 0$, $\mu_2 = 2.5$), fixed standard deviations ($\sigma_1 = 0.2$, $\sigma_2 = 0.4$), 
but a sinusoidally varying mixture weight $w(x)$, with mode 2 (the right mode at $y = 2.5$ in \Cref{fig:synthetic} left) having lower marginal weight $0.4$.
The decision maker must choose a protection level $a \in [0, 1]$ against a hazard concentrated above a threshold at $y = 1.0$ (between the two modes): higher $a$ reduces exposure to the hazard but incurs a growing protection cost.
The threshold is set so that the mixture weight $w(x)$ is the key determinant of the optimal action and mode-weight tracking, thus the key diagnostic.
We train a distributional diffusion model on this task across multiple data and training seeds; details are in \cref{app:synthetic}.

\textbf{Results.}
With this setup in place, we first ask whether a standard proper scoring rule suffices. Under pure CRPS training ($w_d=0)$, the diffusion model roughly captures the extreme values of $a^*(x)$ but fails to reproduce the sinusoidal transitions of $a^*$ between extremes, reflecting inaccurate mode weight tracking across $x$ (\cref{fig:synthetic}).
Inspecting the mode weights of the marginal predictive distributions shows that the costly mode (mode 2 in \Cref{fig:synthetic}) is systematically underweighted when trained with CRPS alone ($w_d=0$).

Adding the decision loss directly addresses this: as the decision weight $w_d$ increases, the predicted $a^*(x)$ tracks the ground truth more closely, with larger improvements at higher $w_d$, consistently across data seeds (\cref{fig:synthetic}). 
The marginal mode weights are progressively corrected as $w_d$ increases, with probability mass shifting toward mode 2, improving decision skill. Results for all data seeds and aggregate metrics are provided in \cref{app:synthetic}.

\subsection{Wind Power Dispatch}
\label{sec:windpower}
\textbf{Setup.}
Wind power output is inherently uncertain, thus wind power operators must commit to a power production level (the dispatch) hours before realised output is known; over-commitment incurs shortfall penalties while under-commitment forgoes revenue~\citep{bruninx2025, botterud2011, bourry2008}.
The power curve $P$ maps wind speeds to power output and introduces a non-trivial complication: at high wind speeds, turbines are shut down for safety (cut-off event), so production drops from peak output to zero; this cut-off regime is the most consequential for the dispatch decision since a missed cut-off event incurs the largest shortfall penalty.
Accurate probabilistic modelling of this regime is therefore crucial for optimal decision-making.
We study wind power dispatch at an offshore wind farm (North Sea, Norway), where the decision maker must commit to a normalized power dispatch level $a \in [0, 1]$ based on a forecast.
We model this task via a cost function with quadratic shortfall penalty~\citep{bruninx2025}
\begin{equation}
    c(a, y) = -a + \lambda \, \mathrm{relu}(a - P(y))^2,
\end{equation}
where $-a$ reflects the revenue from committing to dispatch level $a$ (higher commitment earns more), $P(\cdot)$ is a differentiable power curve  (\cref{fig:windpower_setup}), and $\lambda$ controls the shortfall penalty.
The modeled power curve has three regions: a \emph{ramp} region where output rises with wind speed (3--15\,m/s at turbine hub height), a \emph{rated} region of flat maximum output (15--22\,m/s), and a \emph{cut-off} region where output drops sharply to zero above $\approx$22\,m/s ($\approx$5\% of observations).

We forecast wind speed (at 10\,m)  6\,h ahead from ERA5 reanalysis data~\citep{hersbach2020}, using lagged wind speed observations as inputs. We deliberately limit the feature set to preserve meaningful forecast uncertainty.
Forecast samples are extrapolated to turbine hub height, passed through $P(\cdot)$, and $a^*$ is solved via the optimisation layer; the observed cost is then computed using the true observed power (see \cref{app:windpower} for details).
We train an implicit generative model and a distributional diffusion model across $w_d \in \{0.0, 0.1, 0.3, 0.5, 0.7, 0.9\}$ with three seeds (training details in \cref{app:windpower}).
We sweep $\lambda \in \{3, 5, 10\}$ and report $\lambda = 5$ as the primary setting.

\textbf{Results.}
\begin{figure*}[t]
    \centering
    \includegraphics[width=\linewidth]{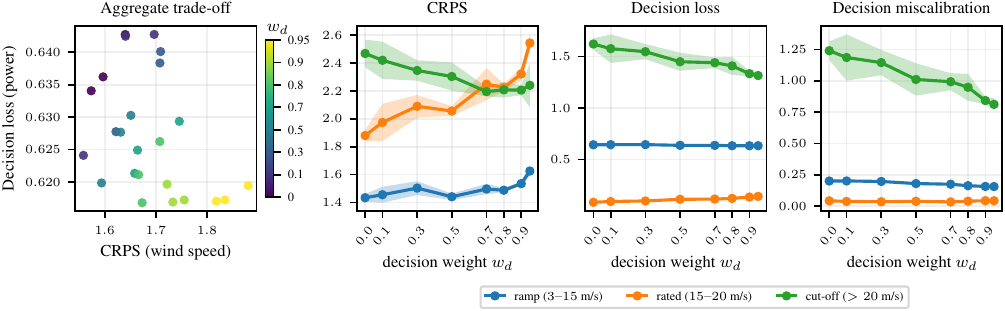}
    \caption{
        \textbf{Wind power dispatch results} ($\lambda = 5$, implicit generative model).
        \emph{Left:} aggregate CRPS vs.\ decision loss trade-off across all $w_d$ and seeds. Increasing $w_d$ trades CRPS for decision loss improvement at the aggregate level.
        \emph{Right (three panels):} conditional metrics by power curve region vs.\ $w_d$ (mean $\pm$ 1\,std across seeds). 
        Improvements concentrate in the cut-off region; the rated region degrades (in CRPS); the ramp region is mostly unaffected.
    }
    \label{fig:windpower_main}
\end{figure*}
At the aggregate level (\cref{fig:windpower_main}, left), increasing $w_d$ traces a consistent trade-off in the implicit generative model: decision loss improves while CRPS degrades, with $w_d \approx 0.5$--$0.7$ offering the best balance.
\Cref{fig:windpower_main} disaggregates performance across all three power curve regions (for $\lambda = 5$). 
The effect of decision-aware training concentrates in the cut-off region ($v > 20$\,m/s
\footnote{We use 20\,m/s as the analysis threshold rather than the physical cut-off speed of 22\,m/s to include events near the cut-off where power is already falling sharply.}), 
where forecast errors are most costly: at $w_d = 0.9$, CRPS improves by 11\%, decision cost by 18\%, and decision calibration by 32\% on average compared to $w_d=0$. 
This improvement in the cut-off region comes at the cost of a slight degradation in CRPS and decision cost in the rated region (monotonically with $w_d$), and no effect in the ramp region.
Consistent improvements are also observed for the diffusion model (\cref{app:windpower_diffusion}).

Cut-off improvements scale monotonically with $\lambda$ in both models (\cref{fig:windpower_cutoff_implicit,fig:windpower_cutoff_diffusion}): higher penalty produces stronger corrections in the tail. However, at very high $w_d$, CRPS degrades again, suggesting the decision loss overshoots the optimal distributional correction.
The correction mechanism is further visible via a median shift $\Delta Q_{0.5}(v)$ in the predictive distributions (\cref{fig:windpower_rolling_q_implicit,fig:windpower_rolling_q_diffusion}): corrections are negative in the ramp (more conservative dispatch) and positive beyond the cut-off threshold (more mass in the tail where power drops to zero).
The decision calibration gap closes from both sides: observed cost decreases (trained objective) while the estimated expected cost increases, indicating more accurate cost estimation in both models; most visible in the cut-off region (\cref{fig:windpower_cal_decomp_implicit,fig:windpower_cal_decomp_diffusion}).

\textbf{Mechanism.}
The regional pattern of improvements and degradations follows from how the decision-loss gradient propagates through the power curve.
In the \emph{ramp} region ($\partial P/\partial y > 0$, $a^*$ interior), gradients propagate and produce a shift to more conservative values in the predictive distribution, but aggregate metrics remain unaffected.
In the \emph{rated} region, $\partial P/\partial y \approx 0$ suppressing the decision loss gradient, leaving CRPS at reduced weight $(1-w_d)$ and explaining the degradation.
In the \emph{cut-off} region, $\partial P/\partial y \ll 0$ and $a^*$ is interior; the gradient is strong, providing explicit supervision on the rare, high-cost tail events that CRPS under-weights due to their low frequency.

\subsection{Frost Protection}
\label{sec:frost}
\textbf{Setup.}
Frost is among the most economically damaging meteorological hazards~\citep{juurakko2021, white1975}, requiring decision makers to commit to protection resources before temperatures are observed. However, protection comes at a cost and should only be applied when required.
We study this frost protection task in Berlin using ERA5 reanalysis data~\citep{hersbach2020}, forecasting temperature (at 2\,m, T2m) 24\,h ahead from lagged ERA5 temperature observations over a spatial patch, restricted to winter months.
The decision maker chooses a frost protection level $a \in [0, 1]$ before observing the realised temperature $y$. The cost function has the same sigmoid threshold structure as in \cref{sec:synthetic} (\cref{fig:synthetic}, left), but mirrored: the hazard lies below rather than above the threshold.
It penalises both unprotected frost (false negatives, FN) and unnecessary protection (false positives, FP), with $\alpha$ controlling the FP/FN cost ratio such that lower $\alpha$ penalises false negatives (unprotected frost) more heavily. 
Unlike the rare cut-off event in \cref{sec:windpower}, frost occurs in approximately 25\% of observations, making this a common event, where CRPS is less likely to under-weight the relevant area. 
We train an implicit generative model and a distributional diffusion model across $w_d \in \{0.0, 0.1, \ldots, 0.9\}$ and $\alpha \in \{0.2, 0.3, 0.5\}$ with three seeds (training details in \cref{app:frost_setup}); diffusion model results are in \cref{app:frost_diffusion}.

\begin{figure*}[t]
    \centering
    \includegraphics[width=\linewidth]{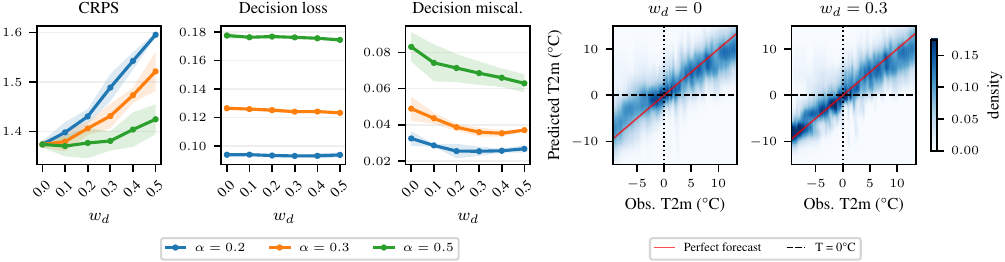}
    \caption{
        \textbf{Frost protection results} (implicit generative model).
        \emph{Left (three panels):} Aggregate CRPS, decision loss, and decision miscalibration vs.\ $w_d$ for $\alpha \in \{0.2, 0.3, 0.5\}$ (mean $\pm$ std over seeds). \added{Decision loss remains mostly flat, with a small improvement for $\alpha \in \{0.3, 0.5\}$}; CRPS degrades with $w_d$; decision miscalibration improves most visibly for $\alpha=0.5$.
        \emph{Right (two panels):} Conditional predictive density for $w_d=0$ and $w_d=0.3$ ($\alpha=0.3$) contioned on observed temperature. A warm bias present at $w_d=0$ ($T < 0$\,\textdegree C) is corrected at $w_d=0.3$, with the predictive distribution concentrating closer to the diagonal.}
    \label{fig:frost_main}
\end{figure*}

\textbf{Results.}
At the aggregate level, CRPS degrades with $w_d$, more steeply for lower $\alpha$, while aggregate decision loss is \added{mostly} flat (\cref{fig:frost_main}). For $w_d > 0.5$, we observe spread collapse in many forecasts around the freezing point due to excessive decision loss correction. 
Decision miscalibration improves for all $\alpha$, most visibly for $\alpha=0.5$ by approximately 15\% at $w_d = 0.3$--$0.4$. Forecasts of sub-zero temperatures align more closely with observations when trained with decision loss (\cref{fig:frost_main}, right).
Unlike in wind power dispatch, where both decision cost and decision calibration improve, here decision calibration improves \added{more visibly}.
We attribute this to the 25\% base rate of frost: unlike the rare cut-off event, frost occurs frequently enough that CRPS already allocates substantial gradient signal to the threshold region, producing a near-optimal $a^*$ even without the decision loss.
The decision loss, therefore, has little room to shift $a^*$ further, leaving aggregate cost unchanged.
Decision calibration can nonetheless improve because even a near-optimal action leaves room to better estimate the expected cost $\mathbb{E}[c(a^*, y)]$ from the model samples.
A conditional breakdown (\Cref{fig:frost_conditional}) shows that the no-frost regime drives the improvement in decision calibration, while the frost regime degrades slightly.

\textbf{Mechanism.}
Due to the sharp sigmoid cost function, the decision loss gradient is concentrated near the 0\,\textdegree C threshold. Therefore, quantiles closer to the threshold (cost-sensitive region) shift more than those further away, as shown by the quantile shift plot in \Cref{fig:frost_rolling_dq}, aligning with the gradient analysis in \cref{sec:gradient}.
Unlike in the wind power cut-off region, many forecasts here straddle the threshold, making the decision loss gradient signal dense and its effects already visible at $w_d = 0.1$--$0.2$, whereas wind power cut-off improvements require $w_d \approx 0.5$--$0.7$ due to the rarity of cut-off events.

Together, the two real-world experiments show that our method adapts to the cost structure: in the wind power dispatch task, the gradient signal concentrates on rare high-cost tail events that CRPS underweights; in the frost protection task, it redistributes probability mass around a frequent threshold.

\section{Related Work}
\label{sec:related}

The relevant prior work spans statistics, decision theory, and machine learning. These strands have not previously been connected; we combine them to enable decision-aware training of sample-based generative models.

\textbf{Decision-focused learning.}
Decision-focused learning integrates the downstream decision task into the training objective, most commonly in predict-then-optimise pipelines where a point prediction is followed by a combinatorial or linear programme.
~\citet{elmachtoub2022smart} and ~\citet{wilder2019melding} replace the prediction loss with a surrogate of the task loss. \citet{donti2017task} embeds a differentiable optimisation layer and trains end-to-end. All target point predictions rather than distributions, and evaluate only task loss; we additionally measure decision calibration~\citep{zhao2021}, which assesses whether the model correctly prices the cost of its own decisions.
Our setting differs fundamentally: we work with probabilistic generative models that must produce forecast distributions, and the decision maker evaluates expected cost under the full distribution.
To our knowledge, no existing decision-focused learning method operates in the sample-based generative model setting.

\textbf{Weighted proper scoring rules.}
Weighted proper scoring rules~\citep{gneiting2011comparing, lerch2017forecaster} emphasise specific regions of the outcome space, but require a fixed, pre-specified weight function.
Additionally, they require a closed-form density or cumulative distribution function, making them incompatible with sample-based generative models.
Our method requires no pre-specified weight: the decision loss gradient signal automatically concentrates on the cost-relevant region, which can change for different $a^*$ and therefore adapts to the predicted distribution. 

\textbf{Differentiable optimisation.}
Differentiable optimisation layers~\citep{amos2017optnet, donti2017task, blondel2022efficient} enable gradients to flow through the solution of an optimisation problem embedded in a neural network layer.
We use this machinery to differentiate through the optimal action $a^*$, connecting the decision loss to model parameters via implicit differentiation.

\textbf{Generative weather models.}
State-of-the-art probabilistic ML weather prediction models increasingly use sample-based generative architectures, including diffusion-based~\citep{price2023, couairon2024} and implicit generative~\citep{lang2024b, bonev2025fourcastnet} approaches.
We apply decision-aware training to representative architectures from these classes; the weather domain motivates this directly, as forecasts routinely inform high-stakes operational decisions where calibration in cost-relevant regions matters more than aggregate accuracy.

\section{Conclusion}
\label{sec:conclusion}

Decision-aware training for generative models is theoretically grounded and empirically effective, but its impact is selective. Whether a region of the output space receives a corrective signal depends on the optimal action: interior optima steer approximation errors toward less costly regions; boundary optima leave the original scoring-rule objective in full control. We demonstrated the mechanism on synthetic data, where decision-aware training corrects mode weight misattribution and improves downstream decisions; on wind power dispatch, where it concentrates improvements in the rare but costly cut-off region; and on frost protection, where it improves decision calibration.

A practical limitation is that the optimal decision weight $w_d$ is task-dependent and requires tuning: at moderate $w_d$, a sweet spot typically exists where decision loss improves with little degradation in distributional accuracy, but the right balance depends on the cost function and data regime.
The method also requires a dedicated model per decision-maker, since the cost function is embedded into the training objective.
The optimisation layer adds computational overhead at each training step; further details are provided in \cref{app:implementation}.
For distributional diffusion models, our gradient analysis (\cref{sec:gradient}) characterises the decision loss training signal at the level of individual denoising steps, but does not account for inference-time reverse chain dynamics; a score-function formulation for the distributional denoiser could clarify when and why the effect carries through, and remains an open theoretical question.

Decision-aware training produces forecasts that are more actionable for specific downstream tasks, with direct applications in safety-critical domains such as weather hazard prediction, where better forecasts can directly reduce societal harm. A potential risk is that if the cost function encodes biased priorities, the model will optimise for them, potentially reinforcing skewed downstream decisions.

Extending decision-aware training to large-scale probabilistic weather prediction models~\citep{lang2024b, bonev2025fourcastnet} is an interesting direction for future work. These systems share the same sample-based scoring rule training setup and are deployed in operational contexts where the gap between forecast skill and decision skill is operationally consequential.

\bibliography{references}
\bibliographystyle{icml2026}

\newpage
\appendix
\onecolumn
\renewcommand{\thefigure}{\thesection.\arabic{figure}}
\setcounter{figure}{0}
\section{Theory}
This section provides a definition of the energy score, a proof of \Cref{lemma:strict_properness} and a detailed derivation of the decision loss gradient in \cref{eq:ift_member}.
\subsection{Energy Score}
\label{app:energy_score}
\added{The energy score evaluates a predictive distribution $\hat{F}$ against a realised outcome $y$ via
\begin{equation}
    \mathrm{ES}(\hat{F}, y) = \mathbb{E}_{Y \sim \hat{F}}\|Y - y\| - \frac{1}{2}\mathbb{E}_{Y, Y' \sim \hat{F}}\|Y - Y'\|
\end{equation}}
For sample-based models, both expectations are approximated by Monte Carlo \added{estimates} over $M$ samples $\{\hat{y}_m\}$ \added{forming $\hat{F}_M$}:
\begin{equation}
    \mathrm{ES}(\added{\hat{F}_M}, y) = \frac{1}{M}\sum_{m=1}^M \|\hat{y}_m - y\| - \frac{1}{2M(M-1)}\sum_{m=1}^M\sum_{j=1}^M \|\hat{y}_m - \hat{y}_j\|.
    \label{eq:energy_score}
\end{equation}
For univariate targets the energy score reduces to the continuous ranked probability score (CRPS).

\subsection{Proof of \cref{lemma:strict_properness}}
\label{app:proof_strict_properness}

\begin{lemma}
\label{lemma:strict_properness}
Let $S_1$ be a strictly proper scoring rule and $S_2$ be a proper scoring rule.
Then $S = (1-w_d)\,S_1 + w_d\,S_2$ is strictly proper for any $w_d \in [0, 1)$.
\end{lemma}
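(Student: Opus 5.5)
The plan is to argue directly from the definitions of (strict) properness given in \cref{sec:forecasting}, using linearity of expectation to split the expected combined score into its two components. Fix a true distribution $F$ and an arbitrary forecast $\hat{F} \neq F$. The key is the pointwise-in-$\hat F$ decomposition
\begin{equation*}
\mathbb{E}_{Y\sim F}[S(\hat F,Y)] - \mathbb{E}_{Y\sim F}[S(F,Y)] = (1-w_d)\,\Delta_1 + w_d\,\Delta_2,
\end{equation*}
where $\Delta_i \coloneqq \mathbb{E}_{Y\sim F}[S_i(\hat F,Y)] - \mathbb{E}_{Y\sim F}[S_i(F,Y)]$ for $i=1,2$. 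Obtaining this identity requires that the expectations of $S_1$ and $S_2$ under $F$ are finite, so that the subtraction and the splitting of the expectation are legitimate. I would therefore state the standard convention that $F$ ranges over a class $\mathcal{F}$ on which both scores are $F$-integrable, for instance distributions with finite first moment for the energy score, and bounded costs for $S_c$.

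With the decomposition in hand, the signs settle the matter. Properness of $S_2$ gives $\Delta_2 \ge 0$, and strict properness of $S_1$ gives $\Delta_1 > 0$ whenever $\hat F \neq F$. Since $w_d \in [0,1)$, the weights satisfy $1-w_d > 0$ and $w_d \ge 0$, so
\begin{equation*}
(1-w_d)\Delta_1 + w_d\Delta_2 \;\ge\; (1-w_d)\Delta_1 \;>\; 0.
\end{equation*}
Hence $F$ is the unique minimiser of $\hat F \mapsto \mathbb{E}_{Y\sim F}[S(\hat F,Y)]$, which is strict properness. The case $\hat F = F$ gives equality trivially, so properness itself also holds. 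I would remark that the exclusion of $w_d = 1$ is exactly what keeps the factor $1-w_d$ strictly positive. At $w_d=1$ the combination reduces to $S_c$ alone, which by \Cref{prop:psr} is only proper.

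The argument is routine, so the only real obstacle is the integrability bookkeeping. One must avoid an $\infty - \infty$ expression when some $\hat F$ has infinite expected score. I would handle this by allowing $\mathbb{E}_{Y\sim F}[S_i(\hat F,Y)] \in (-\infty,\infty]$ while requiring finiteness at $\hat F = F$. If either term is $+\infty$, the inequality is immediate; otherwise the subtraction above is valid. Finally, I would note that applying the lemma with $S_1 = \mathrm{ES}$ and $S_2 = S_c$ from \Cref{prop:psr} yields strict properness of the combined loss in \cref{eq:combined}.
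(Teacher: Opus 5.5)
Your proof is correct and matches the paper's argument: split the expected combined score by linearity, then use $(1-w_d)>0$ with strict properness of $S_1$ and $w_d \ge 0$ with properness of $S_2$. The integrability convention you add (finite expected scores at $\hat F = F$, values in $(-\infty,\infty]$ otherwise) is a sensible refinement that the paper leaves implicit.
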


\begin{proof}
The result follows immediately from the definitions. Let $F$ be the true distribution and $\hat{F} \neq F$ any forecast. Then
\begin{align*}
    \mathbb{E}_{Y \sim F}[S(\hat{F}, Y)]
    &= (1-w_d)\,\mathbb{E}[S_1(\hat{F}, Y)] + w_d\,\mathbb{E}[S_2(\hat{F}, Y)] \\
    &> (1-w_d)\,\mathbb{E}[S_1(F, Y)]      + w_d\,\mathbb{E}[S_2(F, Y)] \\
    &= \mathbb{E}_{Y \sim F}[S(F, Y)],
\end{align*}
where the strict inequality uses $\mathbb{E}[S_1(\hat{F}, Y)] > \mathbb{E}[S_1(F, Y)]$ (strict properness of $S_1$) with weight $(1-w_d) > 0$, and $\mathbb{E}[S_2(\hat{F}, Y)] \geq \mathbb{E}[S_2(F, Y)]$ (properness of $S_2$) with weight $w_d \geq 0$.
\end{proof}

\subsection{Finite-Sample Analysis of the Training Objective}
\label{app:finite_m}

\subsection{Derivation of per-sample gradient (\cref{eq:ift_member})}
\label{app:ift_derivation}

\paragraph{Problem setup.}
Define $f(a; \{\hat{y}_j\}_{j=1}^M) := \frac{1}{M}\sum_{j=1}^M c(a, \hat{y}_j)$; we write $f(a)$ for brevity, making the sample dependence explicit only where the argument requires it.
The optimal action solves
\begin{equation}
    \min_a\ f(a)
    \quad \text{s.t.} \quad
    a \geq a_\mathrm{min}, \quad a \leq a_\mathrm{max}.
    \label{eq:opt_problem}
\end{equation}
Since the feasible action set $[a_{\min}, a_{\max}]$ satisfies Slater's condition, the KKT conditions are necessary at the optimal action $a^*$.
Rewriting the constraints in standard form $g_1(a) = a_\mathrm{min} - a \leq 0$, $g_2(a) = a - a_\mathrm{max} \leq 0$, the Lagrangian is
\begin{equation}
    \mathcal{L}(a, \mu_1, \mu_2) = f(a) + \mu_1(a_\mathrm{min} - a) + \mu_2(a - a_\mathrm{max}),
\end{equation}
with KKT conditions at $a^*$:
\begin{align}
    &\text{(stationarity)}           & f'(a^*) - \mu_1 + \mu_2 &= 0, \label{eq:kkt_stat}\\
    &\text{(dual feasibility)}       & \mu_1,\, \mu_2 &\geq 0,\\
    &\text{(complementary slackness)}& \mu_1(a^* - a_\mathrm{min}) &= 0, \quad \mu_2(a_\mathrm{max} - a^*) = 0,
\end{align}
where $f'(a^*) := \partial f / \partial a\,(a^*)$.

\paragraph{Interior solution.}
When $a^*$ is interior, both constraints are inactive, so complementary slackness forces $\mu_1 = \mu_2 = 0$ and all KKT conditions reduce to the first-order condition:
\begin{equation}
    f'(a^*) = \frac{1}{M}\sum_{j=1}^M \frac{\partial c}{\partial a}(a^*, \hat{y}_j) = 0.
    \label{eq:foc}
\end{equation}
To obtain $\partial a^*/\partial \hat{y}_m$ we differentiate the optimality condition~\cref{eq:foc} implicitly with respect to $\hat{y}_m$.
\begin{quote}
\textit{Implicit function theorem~\citep{blondel2022efficient, krantz2002implicit} (IFT).}

If $F\colon \mathbb{R} \times \mathbb{R}^n \to \mathbb{R}$ is continuously differentiable with $F(x_0, \theta_0) = 0$ and $\partial F/\partial x \neq 0$ at $(x_0, \theta_0)$,
then there exists a unique smooth function $x^*(\theta)$ near $\theta_0$ satisfying $x^*(\theta_0) = x_0$ and $F(x^*(\theta), \theta) = 0$.
\end{quote}
Applying this with $x = a$, $\theta = \{\hat{y}_j\}_{j=1}^M$, and $F(a; \{\hat{y}_j\}_{j=1}^M) := f'(a; \{\hat{y}_j\}_{j=1}^M)$, the IFT requires $\partial F/\partial a \neq 0$ at $a^*$:
\begin{equation}
    \frac{\partial F}{\partial a}\bigg|_{a^*} = \frac{1}{M}\sum_{j=1}^M \frac{\partial^2 c}{\partial a^2}(a^*, \hat{y}_j) =: H > 0,
\end{equation}
where $H > 0$ follows from strict convexity of $f$ at $a^*$; the IFT therefore guarantees that $a^*$ is locally a smooth function of $\{\hat{y}_j\}_{j=1}^M$.
In our setting, $a^*$ is already defined as a function of $\{\hat{y}_j\}$; the role of the IFT is therefore to establish smoothness of this function, licensing the implicit differentiation below — not to establish the functional dependence itself.
Since $F(a^*(\{\hat{y}_j\}); \{\hat{y}_j\}_{j=1}^M) = 0$ holds for all $\{\hat{y}_j\}$, its total derivative with respect to any $\hat{y}_m$ is zero; applying the chain rule gives
\begin{equation}
    \left.\frac{\partial F}{\partial a}\right|_{a^*} \cdot \frac{\partial a^*}{\partial \hat{y}_m} + \left.\frac{\partial F}{\partial \hat{y}_m}\right|_{a^*} = 0,
    \qquad
    \left.\frac{\partial F}{\partial \hat{y}_m}\right|_{a^*} = \frac{1}{M}\frac{\partial^2 c}{\partial a\,\partial \hat{y}_m}(a^*, \hat{y}_m),
\end{equation}
where only the $j = m$ term in the sum contributes to $\partial F / \partial \hat{y}_m$.
Solving for $\partial a^*/\partial \hat{y}_m$ gives Equation~\cref{eq:ift_member} in \cref{sec:gradient}.

\paragraph{Boundary solution.}
Consider $a^* = a_\mathrm{max}$ (the case $a^* = a_\mathrm{min}$ is symmetric).
Since $g_1$ is inactive, complementary slackness gives $\mu_1 = 0$, and~\cref{eq:kkt_stat} becomes
\begin{equation}
    f'(a_\mathrm{max}; \{\hat{y}_j\}_{j=1}^M) = -\mu_2 \leq 0,
    \label{eq:boundary_grad}
\end{equation}
where the inequality follows from dual feasibility $\mu_2 \geq 0$.
Excluding the measure-zero case $f'(a_\mathrm{max}) = 0$ (the unconstrained minimiser coincides exactly with $a_\mathrm{max}$), we have $f'(a_\mathrm{max}) < 0$ strictly.

Since $f$ depends on $\hat{y}_m$ through the $m$-th summand, $f'(a_\mathrm{max}; \{\hat{y}_j\}_{j=1}^M)$ is continuous in $\hat{y}_m$ and therefore remains strictly negative under small perturbations $\hat{y}_m \to \hat{y}_m + \delta$.
Since $f'(a_\mathrm{max})$ remains strictly negative (the objective still pushes beyond the boundary), the constraint $g_2$ stays active and $a^*$ stays at $a_\mathrm{max}$; the multiplier $\mu_2$ adjusts to maintain~\cref{eq:kkt_stat} without $a^*$ moving.
Consequently,
\begin{equation}
    \frac{\partial a^*}{\partial \hat{y}_m} = 0.
\end{equation}




\section{Implementation Details}
\label{app:implementation}
Parts of our implementation build on code from \citet{kneissl2025improved}, released under the MIT licence.
Code for reproducing our experiments can be found at \url{https://anonymous.4open.science/r/decision-aware-training-2114}. 

\paragraph{Training.}
We use the Adam optimizer\citep{kingma2014adam} with learning rate $10^{-3}$ to train the models over $2000$ epochs. Loss scales of CRPS and decision loss are estimated once before training and fixed throughout; optionally, a short CRPS pre-training phase precedes scale estimation to ensure realistic samples and cost values. We perform 200 epochs of CRPS pre-training for the wind power dispatch task, since we observe that untrained predictions yield outliers with non-realistic costs. We validate performance every 10 epochs on a validation set and apply early stopping with a patience of 600 epochs.

\paragraph{Distributional diffusion model.}
The denoiser is an MLP with DDPM-style linear noise schedule over $T=100$ steps, with 5 hidden layers of dimension 50 and ReLU activations. $T=100$ is selected based on \cite{debortoli2025distributional, kneissl2025improved}, where improvements saturate for larger $T$. We also ran pilot experiments with larger $T$ but found no improvements. At each step the denoiser predicts a distribution over $\epsilon$ (equivalently $y_0$) given noisy $y_t$, drawing $M$ samples per step. Conditional input $x$ and a per-sample noise vector are injected via FiLM (Feature-wise Linear Modulation) conditioning at every hidden layer, giving each of the $M$ samples a unique modulation. 
\paragraph{Implicit generative model.}
The implicit generative model is an MLP with 5 hidden layers of dimension 50 and ReLU activations. It maps the concatenation of conditioning input $x$ and noise vector $\varepsilon \sim \mathcal{N}(0, I)$ directly to a sample $\hat{y}_m$; $M$ independent noise draws in parallel yield $M$ samples in a single forward pass.

\paragraph{Optimization layer.}
The optimal action $a^*$ is obtained via projected gradient descent on $\frac{1}{M}\sum_m c(a, \hat{y}_m)$ over $\mathcal{A}$, implemented within the optimization layer forward pass. We use 50 optimization steps during training, which small pilot experiments confirmed to be sufficient for convergence; at inference we use 200 steps for higher precision.
Second-order derivatives required for computing the decision loss gradient (\cref{eq:ift_member}) are computed via PyTorch \texttt{autograd}.

Decision-aware training adds computational overhead relative to standard energy score training through the optimisation layer, which is invoked at every forward pass.
The overhead scales with the number of samples $M$ (the optimisation objective is evaluated over all $M$ samples per batch element) and the number of iterations required to solve~\cref{eq:optimal_action} to convergence.

\paragraph{Computational resources}
We use an internal compute cluster with NVIDIA A100 GPUs. Training of all models required approximately \added{50} GPU hours.

\paragraph{Data}
The ERA5 dataset used in the real-world experiments s provided by the Copernicus Climate Change Service under a licence permitting research use~\citep{hersbach2020}.

\section{Additional Results}
This section presents additional experimental details and results for all three experiments in the main paper.

\subsection{Synthetic Decision Task}
\label{app:synthetic}

We use a 1600/200/400 train/val/test split across 3 data seeds and 3 training seeds, training a distributional diffusion model. The denoiser uses $M=10$ samples during training. At inference, we draw 200 samples from the learned predictive distributions via independent reverse chain runs.
The cost function belongs to the same sigmoid threshold family as the frost hazard cost; strict convexity of the expected cost at interior solutions is verified in \cref{app:frost_setup}.

\Cref{fig:synthetic_setup} shows the data generating process (bimodal conditional GMM with input-dependent mode weights and ground-truth optimal action $a^*(x)$).
\Cref{fig:synthetic_aggregate} shows aggregate CRPS, decision loss, and decision calibration vs.\ $w_d$, pooled across all seeds.
\Cref{fig:synthetic_marginals} shows the predictive marginal distributions across seeds and $w_d$ values: mode 1 is heavily overrepresented at $w_d=0$ (CRPS failure), with mode 2 mass progressively recovering as $w_d$ increases.
\Cref{fig:synthetic_action_curves} shows the predicted $a^*(x)$ across data seeds and $w_d$: tracking improves consistently, though some of the troughs (boundary $a^*=0$, no gradient) remain a blind spot.
\Cref{fig:synthetic_mode_weight} shows the conditional mode 2 mass as a function of input $x$: dec loss partially recovers the true $w(x)$ around the peaks but overshoots at high $w_d$.
\begin{figure}[H]
    \centering
    \includegraphics[width=0.4\linewidth]{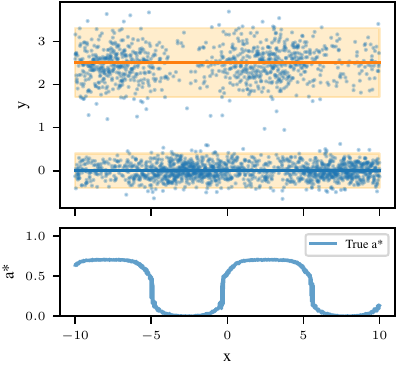}
    \caption{Bimodal conditional GMM (top) and ground-truth optimal action $a^*(x)$ (bottom). The optimal action depends on the mixing weight of the GMM.}
    \label{fig:synthetic_setup}
\end{figure}

\begin{figure}[H]
    \centering
    \includegraphics[width=\linewidth]{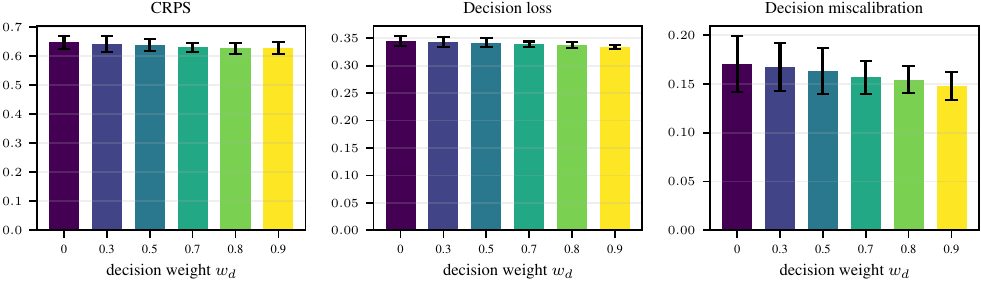}
    \caption{Aggregate CRPS, decision loss, and decision miscalibration vs.\ $w_d$ (mean $\pm$ std, pooled across training and data seeds). CRPS and decision loss improve slightly with $w_d$; average decision calibration improves more consistently throughout.}
    \label{fig:synthetic_aggregate}
\end{figure}

\begin{figure}[H]
    \centering
    \includegraphics[width=\linewidth]{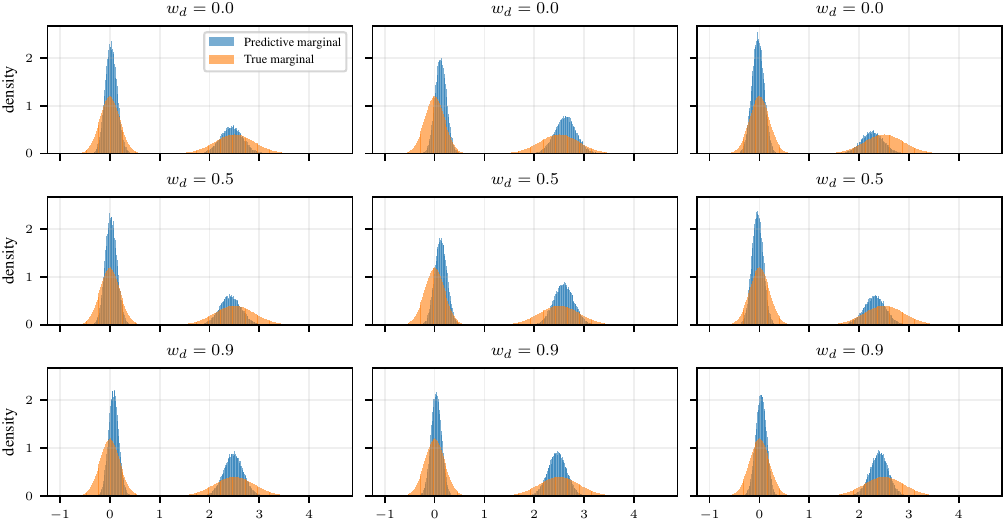}
    \caption{Predictive marginal distributions for $w_d \in \{0.0, 0.5, 0.9\}$ across three training  (one seed per column). Mode 1 is overrepresented at $w_d=0$; mode 2 mass recovers with increasing $w_d$. Predictive distributions remain underdispersed within each mode relative to the true Gaussians.}
    \label{fig:synthetic_marginals}
\end{figure}

\begin{figure}[H]
    \centering
    \includegraphics[width=\linewidth]{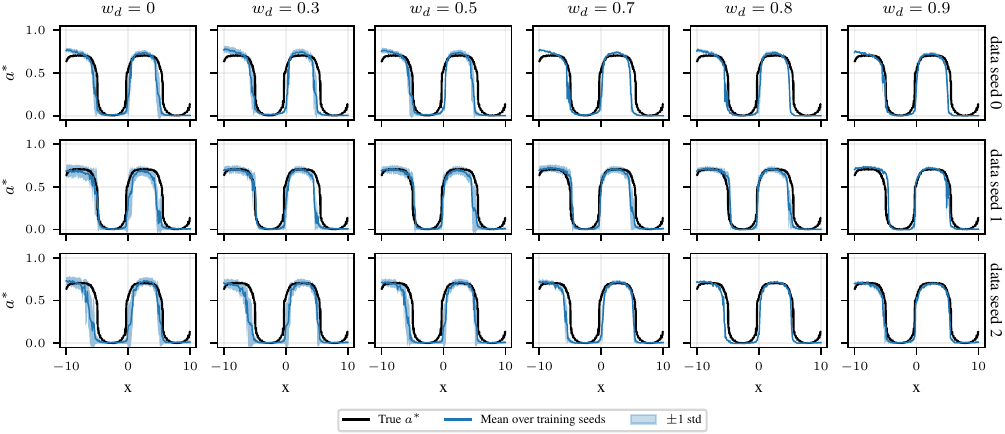}
    \caption{Predicted $a^*(x)$ (mean $\pm$ std over training seeds) vs ground truth across $w_d$ and data seeds (one seed per row). Pure CRPS training ($w_d=0$) fails to track the ground truth action at the transitions between the extremes; decision-aware training  ($w_d > 0.0$) improves the tracking of $a*$ and gets closer to the ground truth for large $w_d$. Some of the troughs ($a^*=0$, boundary case) that exist for $w_d = 0$ remain a blind spot throughout.}
    \label{fig:synthetic_action_curves}
\end{figure}

\begin{figure}[H]
    \centering
    \includegraphics[width=\linewidth]{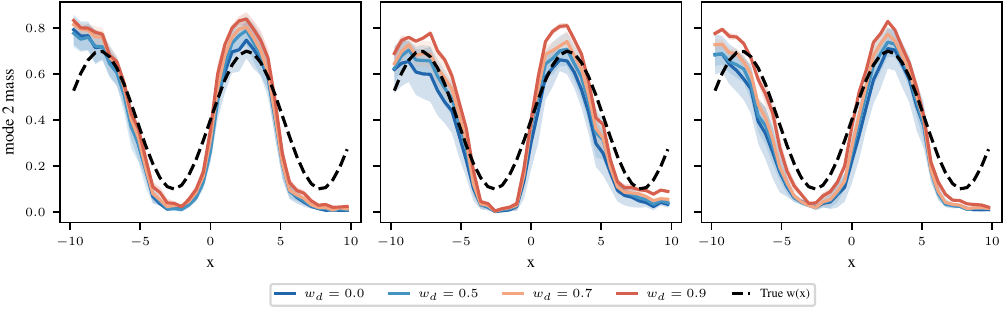}
    \caption{Conditional mode 2 mass estimated within 40 $x$-bins for different $w_d$, per data seed (one seed per panel). Lines show mean over training seeds; shaded band is $\pm$1 std. 
    Decision loss often improves the conditional mode mass estimate (moving towards $w(x)$) around the peaks and the transitions but overshoots at high $w_d$; 
    data seed $0$ (left panel) shows that decision loss can overcorrect even when the baseline already matches the true mode weights.}
    \label{fig:synthetic_mode_weight}
\end{figure}
\subsection{Wind Power Dispatch}
In this section we report experimental details of the wind power dispatch experiment followed by additional results for the implicit generative model as well as results for the distributional diffusion model on this task.

\subsubsection{Experimental setup}
\label{app:windpower}

The power curve maps 10m wind speed to normalised power output via a differentiable sigmoid approximation~\citep{wang2019}; hub-height wind speeds are recovered via a power-law profile~\citep{abbes2012}.
The dispatch cost is $c(a,y) = -a + \lambda\,\mathrm{relu}(a - P(y))^2$, where $a \in [0,1]$ is the committed power fraction, $P(y)$ is the normalised power output, and $\lambda$ controls the shortfall penalty. We do not apply a penalty for power overproduction since this case can be avoided by windpower providers by dynamically reducing power output (curtailment) to match the promised amount \citep{bruninx2025}. In this case, only opportunity costs remain ($c=-a$), since a larger $a$ could have been chosen to increase revenue.

ERA5 inputs are observed wind speed at $t$, $t{-}6$\,h, and $t{-}12$\,h.
We use 2005--2019 for training, 2020 for validation, and 2021 for testing.
The test set contains 105 cut-off events ($v > 20$\,m/s at hub height), 230 rated, and 1074 ramp observations.
The implicit generative model uses $M=10$ samples during training and $M=500$ at inference. The distributional diffusion model uses $M=10$ samples for the probabilistic denoiser during training; at inference, 500 samples are drawn via independent reverse chain runs.

Strict convexity of $f(a) = \frac{1}{M}\sum_j c(a,\hat{y}_j)$ (\cref{app:ift_derivation}) at interior solutions holds analytically: 
since $\frac{\partial^2}{\partial a^2} c(a, \hat{y}_j) = 2\lambda$ when $\hat{y}_j < a$ and $0$ otherwise, we have $H = (2\lambda/M)\,\#\{j : \added{P(\hat{y}_j)} < a^*\}$. 
At any interior $a^*$, at least one sample must satisfy $\hat{y}_j < a^*$ --- otherwise $f'(a^*) = -1 < 0$, contradicting optimality --- so $H > 0$.

\begin{figure}[H]
    \centering
    \includegraphics[width=0.48\linewidth]{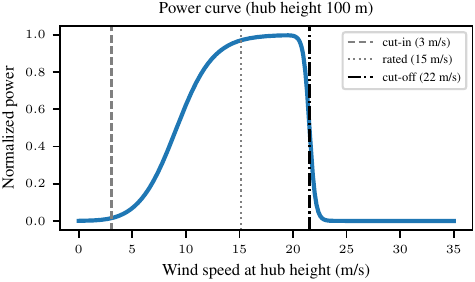}
    \hfill
    \includegraphics[width=0.48\linewidth]{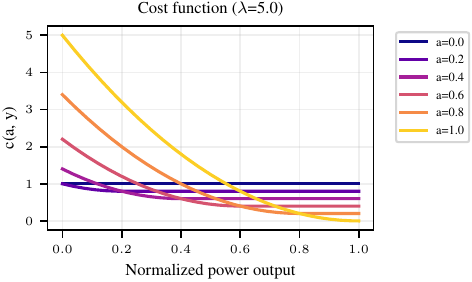}
    \caption{\emph{Left:} differentiable power curve used for converting wind speed at turbine hub height to normalized power output. \emph{Right:} dispatch cost $c(a,y)$ for $\lambda = 5$ as a function of committed power $a$ and observed power output $y$.}
    \label{fig:windpower_setup}
\end{figure}

\subsubsection{Additional results: implicit generative model}
\label{app:windpower_implicit}

\Cref{fig:windpower_scatter_plus_regional} shows the aggregate CRPS vs decision loss trade-off across seeds and $w_d$
and the regional breakdown for $\lambda=3$ and $\lambda=10$ (the main text shows $\lambda=5$).
\Cref{fig:windpower_cutoff_implicit} shows the cut-off region ablation across $\lambda$: improvements scale monotonically with $\lambda$.
\Cref{fig:windpower_rolling_q_implicit} shows the rolling-window median shift $\Delta Q_{0.5}(v)$: the correction is negative in the ramp region, reverses beyond cut-off, and its magnitude (in ramp region) tracks the power curve slope.
\Cref{fig:windpower_cal_decomp_implicit} shows the decision calibration decomposition for $\lambda \in \{3, 10\}$.

\begin{figure}[H]
    \centering
    \includegraphics[width=\linewidth]{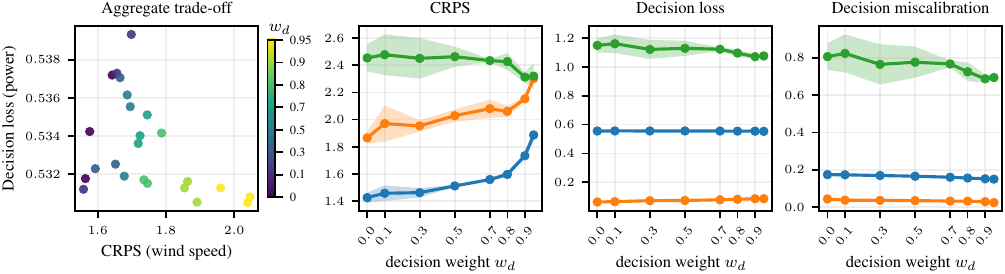}\\
    \includegraphics[width=\linewidth]{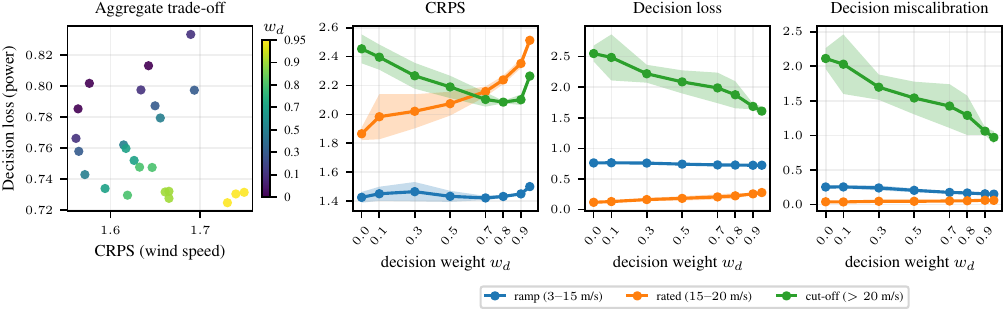}
    \caption{Same plot as \cref{fig:windpower_main} in main text ($\lambda = 5$), for $\lambda = 3$ (top row) and $\lambda = 10$ (bottom row); implicit generative model. \emph{Left column:} Aggregate CRPS vs decision loss trade-off across seeds and $w_d$ for $\lambda \in \{3, 10\}$.
    \emph{Right (three columns):} Regional metrics (cut-off, ramp, rated) vs $w_d$ for $\lambda=3$ (top) and $\lambda=10$ (bottom). 
    Cut-off improvements are strongest for $\lambda=10$, confirming that higher shortfall penalty amplifies the gradient signal in the cut-off region.}
    \label{fig:windpower_scatter_plus_regional}
\end{figure}

\begin{figure}[H]
    \centering
    \includegraphics[width=\linewidth]{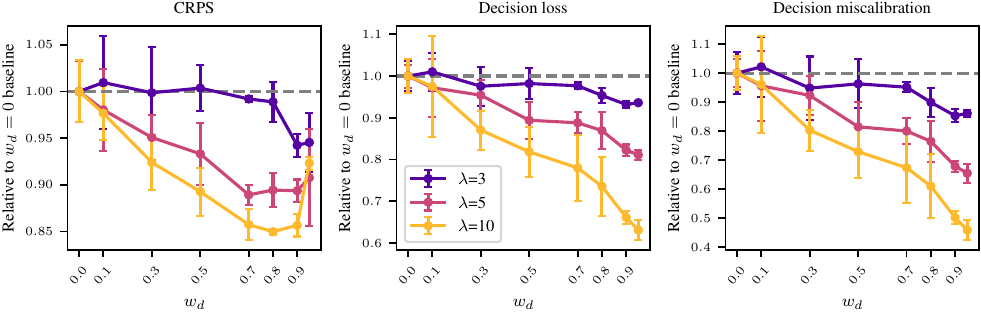}
    \caption{Cut-off region metrics vs $w_d$ across $\lambda \in \{3,5,10\}$, implicit generative model. 
    Decision cost and calibration improvements scale monotonically with $\lambda$ as a result of a stronger decision loss gradient; CRPS degrades at high $w_d$ and high $\lambda$, indicating overshoot (distributional fit degrades).}
    \label{fig:windpower_cutoff_implicit}
\end{figure}

\begin{figure}[H]
    \centering
    \includegraphics[width=\linewidth]{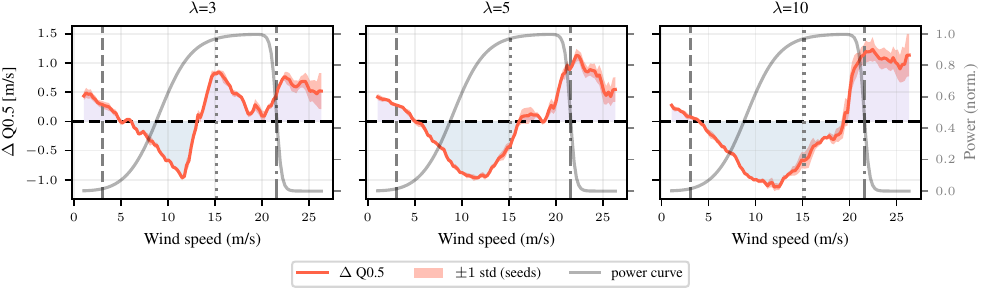}
    \caption{Rolling-window median shift $\Delta Q_{0.5}(v)$ conditioned on observed wind speed for $\lambda \in \{3,5,10\}$, implicit generative model. 
    Correction is negative in the ramp (more conservative dispatch) with the correction magnitude tracking the power curve slope, and positive in the cut-off region (more mass in costly tail where power drops to zero). This effect strenghthens with $\lambda$. The model learns different correction behavior across outputs that reflects the cost function structure. Window size $\pm 1.5 \text{ m/s}$.}
    \label{fig:windpower_rolling_q_implicit}
\end{figure}

\begin{figure}[H]
    \centering
    \includegraphics[width=\linewidth]{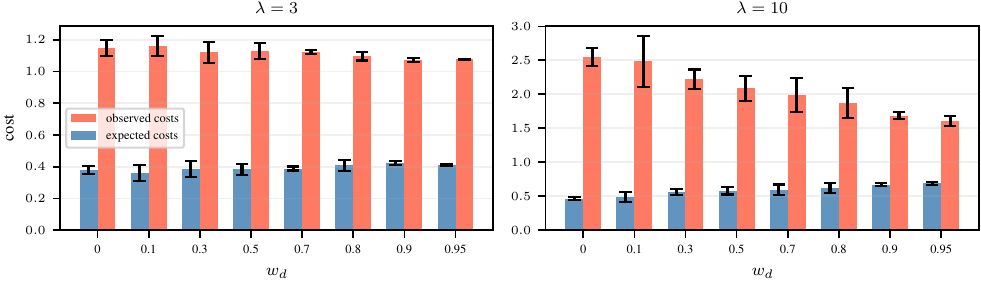}
    \caption{Decision calibration decomposition for $\lambda=3$ (left) and $\lambda=10$ (right), implicit generative model. Each bar shows the test-set average of one side of \cref{eq:dec_calibration}: 
    average observed cost $\frac{1}{N}\sum_n c(a^*_n, y_n)$ and average estimated expected cost $\frac{1}{N}\sum_n \mathbb{E}_{\hat{y}}[c(a^*_n, \hat{y})]$. 
    The observed cost decreases (training objective) while the estimated expected cost increases, indicating more accurate cost estimation; both bars move toward each other with increasing $w_d$. 
    Note that the experiment results in the main text (decision miscalibration) use a stronger variant to compute this gap by averaging per-sample absolute differences. The gap between bars in this plot, therefore, provides a lower bound on decision miscalibration reported in the main paper.}
    \label{fig:windpower_cal_decomp_implicit}
\end{figure}

\subsubsection{Distributional diffusion model results}
\label{app:windpower_diffusion}

\Cref{fig:windpower_aggregate_diffusion} shows aggregate metrics vs $w_d$: CRPS improves then overshoots at high $w_d$; aggregate decision loss is flat; decision calibration improves consistently for all $\lambda$.
\Cref{fig:windpower_regional_lam3_diffusion} shows the full regional breakdown for each $\lambda$.
\Cref{fig:windpower_cutoff_diffusion} shows the cut-off region ablation across $\lambda$.
\Cref{fig:windpower_rolling_q_diffusion} shows the rolling-window median shift.
\Cref{fig:windpower_cal_decomp_diffusion} shows the decision calibration decomposition for $\lambda \in \{3, 10\}$.

\begin{figure}[H]
    \centering
    \includegraphics[width=\linewidth]{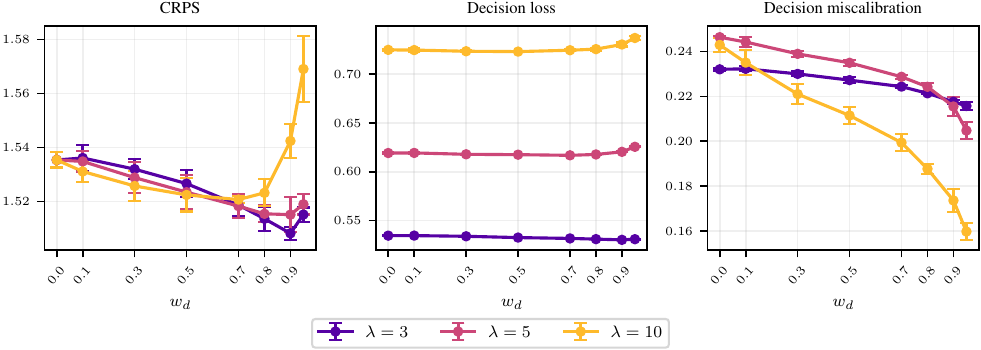}
    \caption{Aggregate metrics vs $w_d$ for $\lambda \in \{3,5,10\}$, distributional diffusion model. The tradeoff between CRPS and decision loss is less prominent than for the implicit generative model. CRPS improves with $w_d$, but degrades for large values. Average decision loss stays relatively flat but decision miscalibration improves with $w_d$. Decision calibration improvements are stronger at higher shortfall penalty $\lambda$.}
    \label{fig:windpower_aggregate_diffusion}
\end{figure}

\begin{figure}[H]
    \centering
    \includegraphics[width=\linewidth]{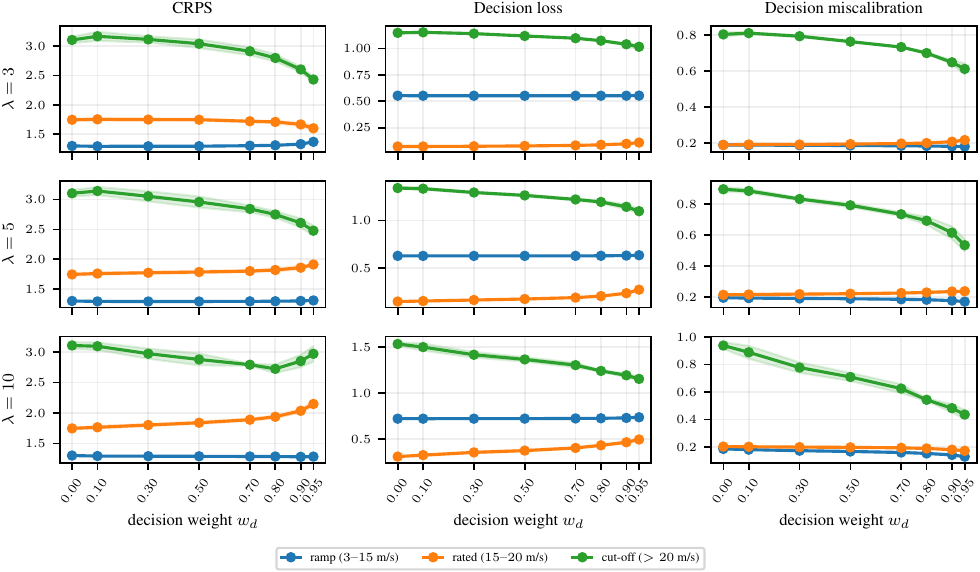}
    \caption{Regional metrics (cut-off, ramp, rated) vs $w_d$ for $\lambda=3$ (top), $\lambda=5$ (middle), and $\lambda=10$ (bottom), distributional diffusion model. Similar to the implicit generative model in the main text, performance in the cut-off region improves with $w_d$.
    Cut-off improvements are stronger at higher $\lambda$; most rated region metrics degrade with $w_d$ across all $\lambda$.}
    \label{fig:windpower_regional_lam3_diffusion}
\end{figure}

\begin{figure}[H]
    \centering
    \includegraphics[width=\linewidth]{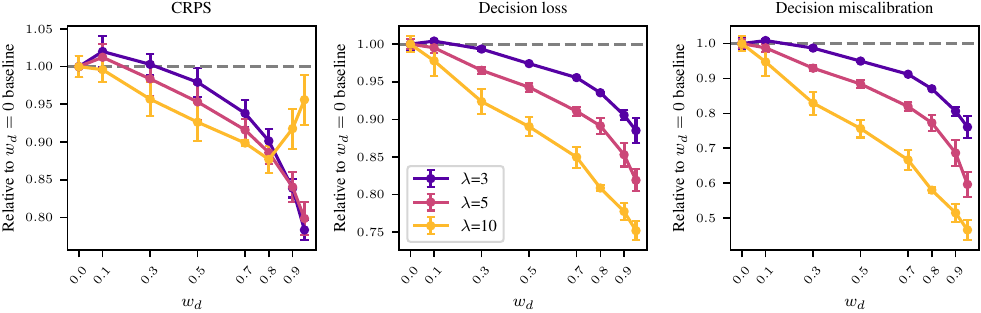}
    \caption{Cut-off region metrics vs $w_d$ across $\lambda \in \{3,5,10\}$, distributional diffusion model. 
    We observe a clear monotonic improvement pattern across $\lambda$, similar to the implicit generative model. For larger $\lambda$, the decision loss gradient provides a stronger learning signal in the cut-off region. Again, CRPS degrades for large $\lambda$ and $w_d$, incidating overshoot (distributional fit degrades).}
    \label{fig:windpower_cutoff_diffusion}
\end{figure}

\begin{figure}[H]
    \centering
    \includegraphics[width=\linewidth]{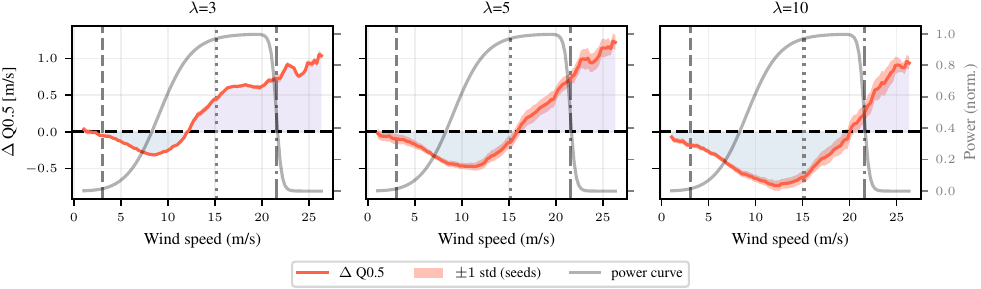}
    \caption{Rolling-window median shift $\Delta Q_{0.5}(v)$ conditioned on observed wind speed for $\lambda \in \{3,5,10\}$, distributional diffusion model. Similar pattern as for the implicit generative model: negative corrections in the ramp region and positive corrections above the cut-off threshold. For $\lambda = 10$, the model learns to transition from negative to positive median corrections at the cut-off speed threshold.}
    \label{fig:windpower_rolling_q_diffusion}
\end{figure}

\begin{figure}[H]
    \centering
    \includegraphics[width=\linewidth]{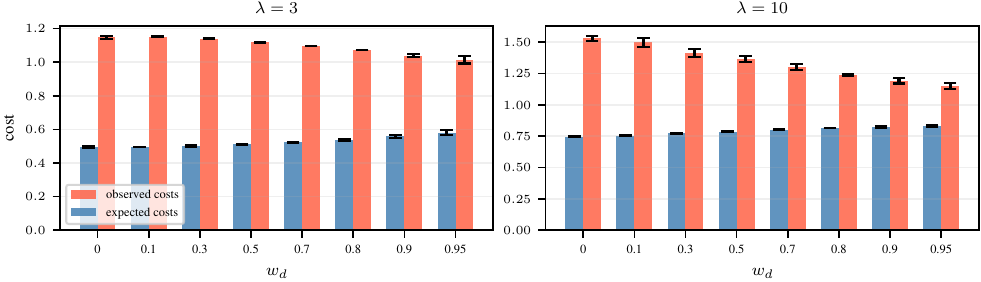}
    \caption{Decision calibration decomposition for $\lambda=3$ (left) and $\lambda=10$ (right), distributional diffusion model. 
    Each bar shows the test-set average of one side of \cref{eq:dec_calibration}: average observed cost $\frac{1}{N}\sum_n c(a^*_n, y_n)$ and average estimated expected cost $\frac{1}{N}\sum_n \mathbb{E}_{\hat{y}}[c(a^*_n, \hat{y})]$. The observed cost decreases (trained objective) while the estimated expected cost increases, indicating more accurate cost estimation; both bars move toward each other with increasing $w_d$. Note that the experiment results in the main text (decision miscalibration) use a stronger variant to compute this gap by averaging per-sample absolute differences. The gap between bars in this plot, therefore, provides a lower bound on decision miscalibration reported in the main paper.}
    \label{fig:windpower_cal_decomp_diffusion}
\end{figure}

\subsection{Frost Protection}
In this section we report experimental details of the frost protection experiment followed by additional results for the implicit generative model as well as results for the distributional diffusion model on this task.
\subsubsection{Experimental setup}
\label{app:frost_setup}
ERA5 inputs are 2\,m temperature from a $3{\times}3$ spatial patch (centered around the target grid point) at three time steps (27 features), restricted to November--March.
The implicit generative model uses $M=10$ samples during training and $M=500$ at inference. The distributional diffusion model uses $M=10$ samples from the probabilistic denoiser during training; at inference, 500 samples are drawn via independent reverse chain runs.
The cost function is a sigmoid threshold at $0$\,\textdegree C; $\alpha = \frac{FP}{FN}$ controls the asymmetry between false positive and false negative costs (\cref{fig:frost_cost_function}).
The gradient derivation requires strict convexity of $f(a) = \frac{1}{M}\sum_j c(a, \hat{y}_j)$ (\cref{app:ift_derivation}) at interior solutions; 
this was verified empirically across 100,000 randomly drawn sample sets of size $M=50$ from the marginal training distribution, with $H > 0$ (strict convexity) in all cases.

\begin{figure}[H]
    \centering
    \includegraphics[width=0.4\linewidth]{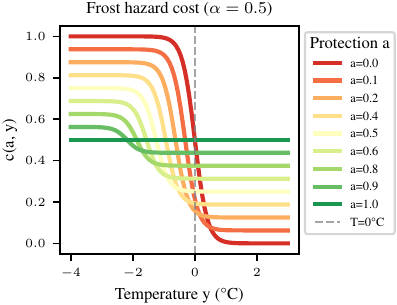}
    \caption{Frost protection cost function for $\alpha=0.5$.}
    \label{fig:frost_cost_function}
\end{figure}

\subsubsection{Additional results: implicit generative model}
\label{app:frost}

The main text shows aggregate metrics for all $\alpha$ values and the conditional KDE for $\alpha=0.3$.
\Cref{fig:frost_kde} shows the conditional KDE for all three $\alpha$ values at $w_d=0.3$, alongside the shared $w_d=0$ baseline.
\Cref{fig:frost_marginal} shows the marginal 2m temperature (T2m) distributions for $\alpha=0.5$: a cold mode missed by pure CRPS ($w_d=0.0$) progressively aligns with the observed distribution at the sweet spot $w_d=0.3-0.5$, before shifting slightly beyond it at $w_d=0.9$.
\Cref{fig:frost_conditional} shows the conditional breakdown by frost and no-frost observations for $\alpha \in \{0.2,0.3,0.5\}$, the no-frost regime improves in terms of decision calibration while the frost regime degrades slightly or stays constant across $w_d$. \Cref{fig:frost_rolling_dq} shows the quantile shift in the predictive distributions introduced through decision loss training: quantiles closer to the cost-sensitive threshold region get shifted more strongly, supporting the claim of probability mass being redistributed around the decision threshold.

\begin{figure}[H]
    \centering
    \includegraphics[width=\linewidth]{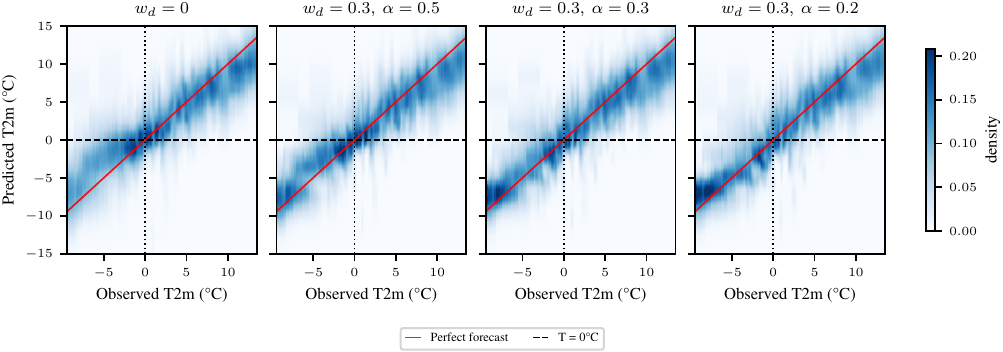}
    \caption{Rolling-window conditional predictive density of temperature forecasts for $w_d=0$ (leftmost, shared baseline) and $w_d=0.3$ for $\alpha=0.2$, $\alpha=0.3$, and $\alpha=0.5$ (left to right), implicit generative model. 
    A warm bias for sub-zero temperatures visible at $w_d=0$ is corrected at $w_d=0.3$ across all $\alpha$ values, with the predictive distribution concentrating closer to the diagonal in the frost region. Window size $\pm 1^\circ$C.}
    \label{fig:frost_kde}
\end{figure}

\begin{figure}[H]
    \centering
    \includegraphics[width=\linewidth]{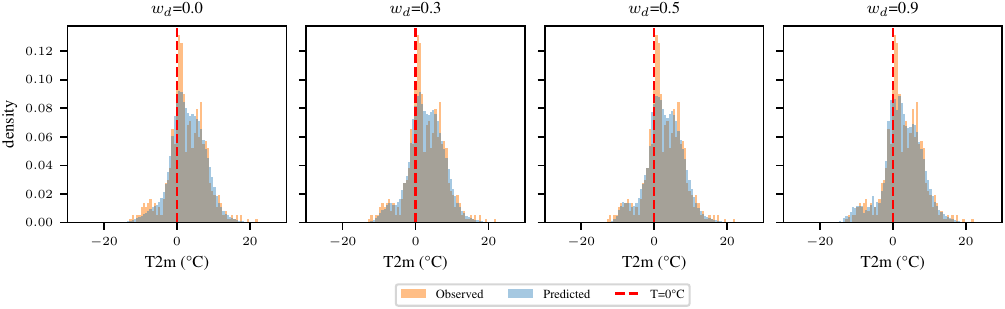}
    \caption{Marginal T2m distribution (observed vs predicted) for $\alpha=0.5$ at $w_d \in \{0.0, 0.3, 0.5, 0.9\}$, implicit generative model. The cold mode aligns with the observed distribution at $w_d=0.3$ and shifts slightly beyond it at $w_d=0.9$.}
    \label{fig:frost_marginal}
\end{figure}

\begin{figure}[H]
    \centering
    \includegraphics[width=\linewidth]{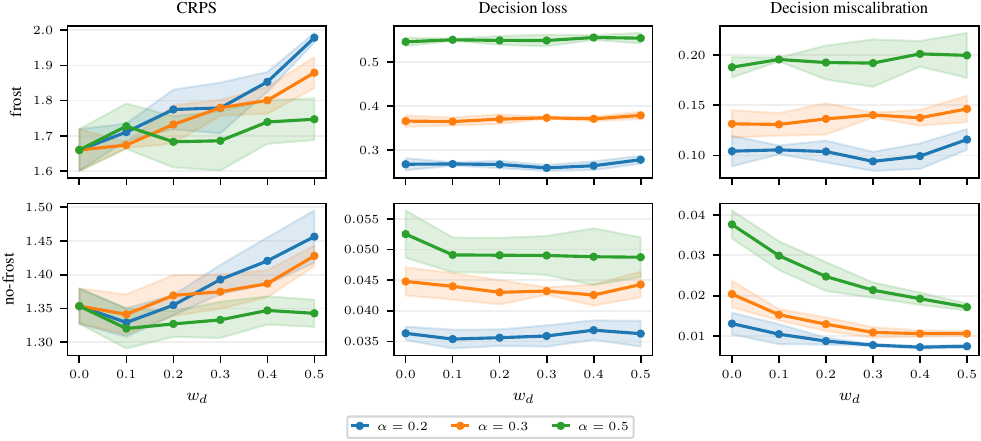}
    \caption{Conditional metrics for frost and no-frost observations vs $w_d$ for $\alpha \in \{0.2,0.3,0.5\}$, implicit generative model: frost performance (top row, obs. temp. $<0$ \textdegree C) and no-frost performance (bottom row, obs. temp. $>0$ \textdegree C). At $\alpha = 0.5$, the no-frost region improves with $w_d$ while the frost region shows now improvement (decision loss and miscalibration). Lowering $\alpha$ (increasing cost for unprotected frost) degrades CRPS more quickly.}
    \label{fig:frost_conditional}
\end{figure}

\begin{figure}[H]
    \centering
    \includegraphics[width=\linewidth]{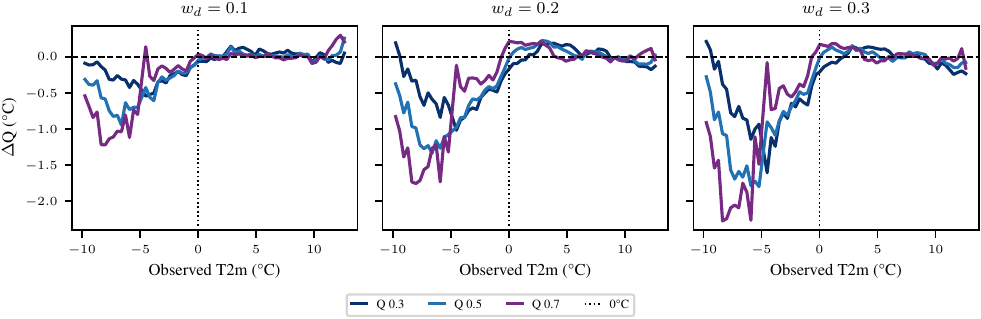}
    \caption{Rolling quantile shift (relative to $w_d=0$) for $w_d \in \{0.1, 0.2, 0.3\}$, implicit generative model ($\alpha=0.3$, window $\pm 1$\textdegree C).
    For observations near the threshold (obs $\approx 0$\textdegree C), all quantiles shift more uniformly as the whole ensemble straddles the cost-active region.
    For deep-cold observations (obs $< -5$\textdegree C), the upper quantiles (Q $0.7$) are closest to the 0\textdegree C threshold and therefore sit closest to the cost-active region, shifting substantially more than the lower quantiles (Q $0.3$), which are further from the threshold.
    This quantile ordering directly illustrates the sample-level redistribution effect of the decision loss gradient predicted by the gradient analysis in \cref{sec:gradient}.
    The effect scales consistently with $w_d$ across all three panels.}
    \label{fig:frost_rolling_dq}
\end{figure}

\subsubsection{Distributional diffusion model results}
\label{app:frost_diffusion}

For the distributional diffusion model, the decision loss does not improve the forecast distributions as expected.
The baseline ($w_d=0$) already exhibits a cold bias: for sub-zero observations, the conditional forecast distribution is shifted toward colder temperatures relative to the observed values, as visible in the KDE plots (\cref{fig:frost_diffusion_kde}).
Decision loss training ($w_d > 0$) does not correct this artifact.
CRPS degrades monotonically for larger $w_d$ in all $\alpha$ settings, yet, decision miscalibration improves for $\alpha < 0.5$.
Notably, the same sigmoid cost function with a distributional diffusion model yields clear improvements on the synthetic decision task (\cref{sec:experiments}), so the failure is not inherent to the cost structure or model class.
We do not have a complete explanation for this failure mode, one hypothesis is that samples from the baseline model ($w_d=0$) are already outside of the strong gradient region of the cost function (too cold), thus the decision loss has no handle to correct them.
We also note that the reverse chain dynamics at inference time are not captured by the per-step training objective (\cref{sec:conclusion}), potentially introducing compounding errors at inference.

\begin{figure}[H]
    \centering
    \includegraphics[width=\linewidth]{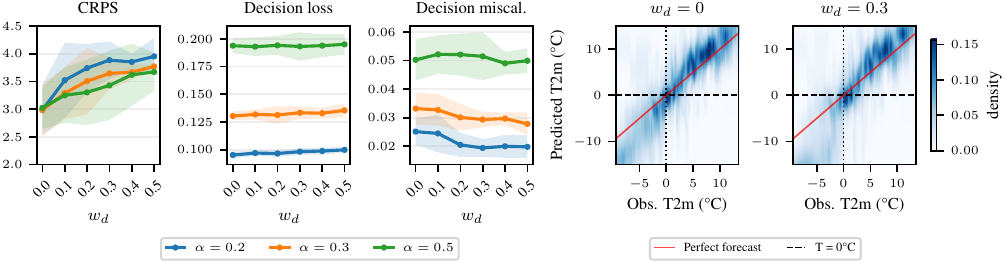}
    \caption{Frost protection results, distributional diffusion model. Same plot as for implicit generative model in \Cref{fig:frost_main}. \emph{Left (three panels):} Aggregate CRPS, decision loss, and decision miscalibration vs $w_d$ for $\alpha \in \{0.2,0.3,0.5\}$ (mean $\pm$ 1 std over seeds). \emph{Right (two panels):} Conditional predictive density for $w_d=0$ and $w_d=0.3$ ($\alpha=0.3$) conditioned on observed temperature. The baseline ($w_d=0$) already exhibits a cold bias for sub-zero observations visible in the conditional KDE. Decision loss training ($w_d = 0.3$) does not correct this bias. CRPS degrades monotonically with larger $w_d$ for all $\alpha$ while decision loss stays flat throughout. For $\alpha<0.5$, decision calibration nonetheless improves with $w_d$, suggesting the expected cost of the optimal action becomes better aligned with the observed cost even as the overall distributional quality degrades.}
    \label{fig:frost_diffusion_kde}
\end{figure}

\end{document}